\newtheorem{theorem}{Theorem}
\newtheorem{lemma}[theorem]{Lemma}
\newtheorem{cor}[theorem]{Corollary}
\def\bOmega{\boldsymbol{\Omega}}
\def\zeros{\boldsymbol{0}}
\def\ones{\boldsymbol{1}}
\def\bn{{\mathbf{n}}}
\DeclareMathOperator{\tr}{tr}
\DeclareMathOperator{\rank}{rank}
\newcommand{\cov}{\textnormal{cov}}
\newcommand{\bx}{\mathbf{x}}
\newcommand{\by}{\mathbf{y}}
\newcommand{\bxt}{\widetilde{\mathbf{x}}}
\newcommand{\bX}{\mathbf{X}}
\newcommand{\bY}{\mathbf{Y}}
\newcommand{\bI}{\mathbf{I}}
\newcommand{\bSigma}{\mathbf{\Sigma}}
\newcommand{\bLNN}{\mathbf{L}_{N}}
\newcommand{\bDelta}{\mathbf{\Delta}}
\newcommand{\bA}{\mathbf{A}}
\newcommand{\bF}{\mathbf{F}}
\newcommand{\bD}{\mathbf{D}}
\newcommand{\bL}{\mathbf{L}}
\newcommand{\bC}{\mathbf{C}}
\newcommand{\cS}{\mathcal{S}}
\newcommand{\cN}{\mathcal{N}}
\newcommand{\cV}{\mathcal{V}}
\newcommand{\cE}{\mathcal{E}}
\newcommand{\cT}{\mathcal{T}}
\newcommand{\bbR}{\mathbb{R}}
\title{On the Supermodularity of Active Graph-based Semi-supervised Learning with Stieltjes matrix Regularization}
\name{Pin-Yu Chen$^{*}$\thanks{$^*$P.-Y. Chen and D. Wei contributed equally to this work.} and Dennis Wei$^{*}$}
\address{IBM Research AI, Thomas J. Watson Research Center, Yorktown Heights, NY 10598, USA\\
pin-yu.chen@ibm.com,  dwei@us.ibm.com
	}
\begin{document}
\ninept
\maketitle
\begin{abstract}
Active graph-based semi-supervised learning (AG-SSL) aims to select a small set of labeled examples and utilize their graph-based relation to other unlabeled examples to aid in machine learning tasks. It is also closely related to the sampling theory in graph signal processing. In this paper, we revisit the original formulation of graph-based SSL  and prove the supermodularity of an AG-SSL objective function under a broad class of regularization functions parameterized by Stieltjes matrices. Under this setting, supermodularity yields a novel greedy label sampling algorithm with guaranteed performance relative to the optimal sampling set. Compared to three state-of-the-art graph signal sampling and recovery methods on two real-life community detection datasets, the proposed AG-SSL method attains superior classification accuracy given limited sample budgets.
\end{abstract}
\begin{keywords}
active semi-supervised learning, greedy algorithm, graph signal processing, supermodularity
\end{keywords}
\section{Introduction}
\label{sec:intro}
Given a set of labeled examples and a set of unlabeled examples, the goal of semi-supervised learning (SSL) is to leverage the inherent relations between  labeled and unlabeled examples for improved learning performance. In particular, graphs are often used to specify the relation between examples \cite{liu2012robust,Sandryhaila13,Sandryhaila14,Shuman13}, where a node represents an example and an edge reveals the similarity between two examples, quantified by the edge weight. Furthermore, in \emph{active} graph-based SSL (AG-SSL), the set of labeled examples is not given in advance but rather is chosen strategically 
based on the graph representation.  Active learning is appealing in cases when label querying is expensive while unlabeled examples are readily available.

In recent years, AG-SSL has received great attention in the context of sampling and recovery methods for graph signal processing \cite{gadde2014active,anis2017sampling}, where labels are viewed as signals atop the underlying graph. This body of work has focused on the notions of graph Fourier transform and bandlimitedness, in analogy with the sampling theory in traditional (i.e.\ temporal) signal processing. For a graph signal $\bx \in \bbR^n$, its graph Fourier transform (GFT) is defined as $\bxt=\bF \by$, where $\bxt$ is a vector of the  corresponding graph Fourier coefficients. The Fourier transform matrix $\bF \in \bbR^{n \times n}$ is derived from the eigenvectors of 
a real symmetric matrix $\bX$ that represents the graph. 
Popular choices of $\bX$ are $\bX=\bA$, the (weighted) adjacency matrix, $\bX=\bL$, the (unnormalized) graph Laplacian matrix defined as $\bL=\bD-\bA$, with $\bD$ denoting the diagonal matrix of node degrees, and $\bX=\bLNN$, the normalized 
 graph Laplacian matrix defined as $\bLNN=\bD^{-1/2} \bL \bD^{-1/2}$. For a given $\bF$, the graph signal $\bx$ is said to be $K$-bandlimited if there are only $K$ nonzero coefficients in $\bxt$, and $K$ is referred to as the bandwidth. This notion of bandlimitedness, i.e.\ restriction to a subspace, has been essential to 
 the recent development of graph signal sampling and recovery schemes \cite{anis2014towards,chen2015discrete}.  
 
Different from the perspective of graph signal processing,  in this paper we revisit the original formulation of graph-based SSL proposed in \cite{belkin2004regularization,belkin2006manifold}. In particular, we dispense with any assumptions on the bandlimitedness of graph signals. We formulate the problems of inferring a graph signal (labels) based on an incomplete set of noisy linear observations as well as selecting the set of observations to maximize the precision of this inference. We prove that under a broad class of regularization functions parameterized by the family of Stieltjes matrices, the objective function is supermodular in the set $\cS \subset \cV$ of labeled examples. A real symmetric matrix $\bX$ is said to be a (possibly singular) Stieltjes matrix if it is positive semidefinite and its off-diagonal entries are non-positive, which  applies to the two popular matrices $\bL$ and $\bLNN$ used in graph signal processing. Under this setting, we propose 
an efficient greedy AG-SSL algorithm for selecting a set $\cS$ of samples with $|\cS|=s$. Supermodularity then guarantees that the loss resulting from 
the proposed AG-SSL method is bounded by a constant factor relative to the optimal sampling set $\cS^*$, where finding $\cS^*$ requires searching over all $\binom{n}{s}$ possible combinations of examples and is hence computationally infeasible. In contrast, under a different bandlimited signal model, supermodularity does not hold and only weak supermodularity can be ensured \cite{chamon2016near,chamon2017greedy}, resulting in a worse performance guarantee.
Tested on two real-life community detection datasets and given limited sample budgets, the proposed AG-SSL method attains superior classification accuracy than three state-of-the-art graph signal sampling and recovery methods.

\section{Problem formulation}
\label{sec:prob}

Let $G=(\cV,\cE)$ be an undirected weighted connected graph, where $\cV$ and $\cE$ are the sets of nodes and edges with size $|\cV|=n$ and $|\cE|=\ell$, respectively. The edge weights in $G$ are represented by a symmetric (weighted) adjacency matrix $\bA \in \bbR^{n \times n}$, where $\bA_{ij} > 0$ specifies the weight of an edge $(i,j) \in \cE$, and $\bA_{ij} = 0$ otherwise. The (weighted) degree matrix $\bD$ is then defined as a diagonal matrix with the vector $\bA \ones$ along the diagonal, where $\ones$ is the all-ones vector, and the graph Laplacian matrices $\bL$ and $\bLNN$ are defined as $\bL = \bD - \bA$ and $\bLNN = \bD^{-1/2} \bL \bD^{-1/2}$. The notation $\tr$ denotes matrix trace and $\|\bx\|$ refers to the Euclidean norm of a vector $\bx$.

We denote by $\bx$ the signal (i.e., labels) to be recovered on the graph $G$. In this work, $\bx$ is modelled as a random signal with a multivariate Gaussian prior distribution,
\begin{equation}\label{eqn:prior}
p(\bx) \propto \exp\left( -\frac{1}{2} \bx^T \bOmega_0 \bx \right),
\end{equation}
where the mean is taken to be zero without loss of generality. The precision matrix $\bOmega_0$ is symmetric positive semidefinite with rank either $n-1$ or $n$.  The case $\rank(\bOmega_0) = n-1$ corresponds to an improper Gaussian distribution.  This allows $\bOmega_0$ to be chosen to be proportional to the unnormalized or normalized graph Laplacian matrices, which have one-dimensional nullspaces (consisting of multiples of $\ones$ in the case of $\bL$).

A total of $m$ noisy linear observations of the form $\by = \bC \bx + \bn$ are available to be taken, where $\bC$ is a fixed $m\times n$ measurement matrix, $\bn \sim \cN(\zeros, \sigma^2\bI)$ represents i.i.d.\ Gaussian noise,\footnote{While our results can also be shown to hold for the noiseless case $\bn = \zeros$, we do not present that case here.} and $\bI$ is the identity matrix.  We are restricted however to choosing a subset $\cS$ of size $s < m$, resulting in the partial observations $\by_{\cS} = \bC_{\cS} \bx + \bn_{\cS}$, where $\bC_{\cS}$ is the submatrix of $\bC$ with rows indexed by $\cS$.  The posterior distribution of $\bx$ given $\by_{\cS}$ is given by 
\begin{equation}\label{eqn:posterior}
p(\bx \mid \by_\cS) \propto \exp\left[ -\frac{1}{2} \left( \frac{1}{\sigma^2} \left\lVert \by_\cS - \bC_\cS \bx \right\rVert^2 + \bx^T \bOmega_0 \bx \right) \right],
\end{equation}
which is also Gaussian with precision matrix
\begin{align}
\label{eqn_precision}
\bOmega(\cS) = \bOmega_0 + \frac{1}{\sigma^2} \bC_{\cS}^T \bC_{\cS}.
\end{align}
We assume that $\bOmega(\cS)$ is non-singular for $\cS \neq \{\varnothing\}$.  In other words, if $\rank(\bOmega_0) = n-1$, the addition of any rank-$1$ term $\bC_v^T \bC_v$, $v \in \cV$, results in a full-rank matrix.  This assumption is satisfied for example if $\bOmega_0$ is proportional to the graph Laplacian matrix $\bL$ and the rows of $\bC$ satisfy $\bC_v \ones \neq 0$ for all $v \in \cV$. 

Given the posterior distribution \eqref{eqn:posterior} and the invertibility assumption above, the estimator $\widehat{\bx}$ that minimizes the mean squared error (MSE) with respect to $\bx$ is the posterior mean, 
\begin{align}
\label{eqn_predictor}
\widehat{\bx} = \mathbb{E}[\bx \mid \by_\cS] = \frac{1}{\sigma^2} \bSigma(\cS) \bC_{\cS}^T \by_{\cS},
\end{align}
where $\bSigma(\cS) = \bOmega(\cS)^{-1}$ is the posterior covariance matrix.  The corresponding minimum MSE is 
\[
\mathbb{E}\left[ \left\lVert \widehat{\bx} - \bx \right\rVert^2 \mid \by_\cS \right] = \tr \cov(\bx \mid \by_\cS) = \tr \bSigma(\cS).
\]
We thus define the objective function 
\begin{equation}\label{eqn:objective}
f(\cS) = \tr \bSigma(\cS) = \tr\bOmega(\cS)^{-1} = \tr \left( \bOmega_0 + \frac{1}{\sigma^2} \bC_{\cS}^{T} \bC_{\cS} \right)^{-1},
\end{equation}
taking $f(\cS) = +\infty$ if $\bOmega(\cS)$ is singular.  The goal of AG-SSL is to select a set $\cS \subset \cV$ of size $\lvert\cS\rvert = s$ to minimize $f(\cS)$.

We note that maximizing the posterior distribution \eqref{eqn:posterior} is equivalent to solving the following minimization problem with $\alpha=\sigma^2$:
\begin{align}
\label{eqn_GSSL}
\textnormal{minimize}_{\bx} \left\lVert \by_\cS - \bC_\cS \bx \right\rVert^2 + \alpha \bx^T \bOmega_0 \bx,
\end{align} 
whose solution is also given by \eqref{eqn_predictor}. The formulation (\ref{eqn_GSSL}) coincides with the formulation of graph-based SSL proposed in \cite{belkin2004regularization,belkin2006manifold}.

\section{Main Results}

\subsection{Properties of objective function}
\label{sec:properties}

In this section, we study properties of the objective function $f(\cS)$ defined in \eqref{eqn:objective}.  These properties all relate to the change in $f(\cS)$ when a node is added to an existing subset.  We denote by $\delta_v(\cS) = f(\cS) - f(\cS\cup\{v\})$ the decrease due to adding node $v$ to $\cS$.  
\begin{lemma}\label{lem:decrease}
	Assume that $\bOmega(\cS)$ in \eqref{eqn_precision} is invertible.  Then the decrease in $f$ due to adding node $v$ to $\cS$ is given by 
	\begin{equation}\label{eqn:decrease}
	\delta_v(\cS) = f(\cS) - f(\cS\cup\{v\}) = \frac{\left\lVert \bSigma(\cS) \bC_v^T \right\rVert^2}{\sigma^2 + \bC_v \bSigma(\cS) \bC_v^T}.
	\end{equation}
\end{lemma}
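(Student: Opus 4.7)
The plan is to recognize that going from $\cS$ to $\cS \cup \{v\}$ adds exactly one row to the submatrix $\bC_\cS$, so the precision matrix undergoes a rank-one update
\[
\bOmega(\cS \cup \{v\}) = \bOmega(\cS) + \tfrac{1}{\sigma^2}\, \bC_v^T \bC_v,
\]
which is immediately read off from the definition \eqref{eqn_precision}. Since $\bOmega(\cS)$ is invertible by hypothesis, the Sherman--Morrison formula applies and yields
\[
\bSigma(\cS \cup \{v\}) = \bSigma(\cS) \;-\; \frac{\bSigma(\cS)\, \bC_v^T \bC_v\, \bSigma(\cS)}{\sigma^2 + \bC_v\, \bSigma(\cS)\, \bC_v^T},
\]
where the denominator is strictly positive because $\bSigma(\cS)$ is positive semidefinite and $\sigma^2 > 0$, so no invertibility worries arise in applying the formula.

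Next I would take the trace of both sides. Using linearity of trace and the cyclic property,
\[
\tr\bigl(\bSigma(\cS)\, \bC_v^T \bC_v\, \bSigma(\cS)\bigr) \;=\; \bC_v\, \bSigma(\cS)^2\, \bC_v^T,
\]
which, since $\bSigma(\cS)$ is symmetric, equals $\bigl\lVert \bSigma(\cS) \bC_v^T \bigr\rVert^2$. Substituting into $\tr\bSigma(\cS \cup \{v\}) = f(\cS \cup \{v\})$ and rearranging gives exactly the claimed identity \eqref{eqn:decrease} for $\delta_v(\cS) = f(\cS) - f(\cS \cup \{v\})$.

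There is essentially no main obstacle here: the proof is a direct application of Sherman--Morrison followed by a trace computation. The only thing to be careful about is the symmetry step that turns $\tr(\bSigma \bC_v^T \bC_v \bSigma)$ into a squared Euclidean norm, and the observation that invertibility of $\bOmega(\cS \cup \{v\})$ is automatic once $\bOmega(\cS)$ is invertible (the rank-one perturbation $\tfrac{1}{\sigma^2}\bC_v^T\bC_v$ is positive semidefinite, so the sum remains positive definite), which keeps the identity meaningful for use in subsequent supermodularity arguments.
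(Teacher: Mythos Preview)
Your proposal is correct and matches the paper's own proof essentially line for line: the paper also observes the rank-one update $\bOmega(\cS\cup\{v\}) = \bOmega(\cS) + \sigma^{-2}\bC_v^T\bC_v$, applies the matrix inversion lemma (Sherman--Morrison), and then takes the trace using $\tr(\bX\bY)=\tr(\bY\bX)$. Your added remarks about positivity of the denominator and automatic invertibility of $\bOmega(\cS\cup\{v\})$ are correct and slightly more explicit than the paper, but the argument is the same.
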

\begin{proof}
	We have 
	\[
	\bOmega(\cS\cup\{v\}) = \bOmega_0 + \frac{1}{\sigma^2} \left(\bC_{\cS}^{T} \bC_{\cS} + \bC_{v}^{T} \bC_{v}\right) = \bOmega(\cS) + \frac{1}{\sigma^2} \bC_{v}^{T} \bC_{v}
	\]
	so adding a node results in a rank-1 update to $\bOmega(\cS)$.  By the matrix inversion lemma, 
	\begin{equation}\label{eqn:MTinv}
	\bSigma(\cS\cup\{v\}) = \bSigma(\cS) - \frac{\bSigma(\cS) \bC_v^T \bC_v \bSigma(\cS)}{\sigma^2 + \bC_v \bSigma(\cS) \bC_v^T}.
	\end{equation}
	The result is obtained by taking the trace and applying the identity $\tr(\bX \bY) = \tr(\bY\bX)$ to the numerator.
\end{proof}
\noindent If $\bOmega(\cS)$ is singular, then $\delta_v(\cS) = +\infty$ since the addition of $v$ is assumed to make $\bOmega(\cS\cup\{v\})$ invertible and $f(\cS\cup\{v\})$ is hence finite. 
\begin{cor}
	The objective function $f(\cS)$ is a monotonically non-increasing set function.
\end{cor}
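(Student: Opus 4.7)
The plan is to apply Lemma~\ref{lem:decrease} directly to establish the single-step inequality $f(\cS \cup \{v\}) \leq f(\cS)$ for every $v \in \cV$ and every $\cS \subseteq \cV$, and then to chain this to arbitrary subset containments by repeated one-element additions.

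First I would restrict attention to sets $\cS$ for which $\bOmega(\cS)$ is invertible, so that the formula from Lemma~\ref{lem:decrease} applies. Since $\bOmega_0$ is symmetric positive semidefinite and the added term $\frac{1}{\sigma^2}\bC_\cS^T \bC_\cS$ is also positive semidefinite, an invertible $\bOmega(\cS)$ is in fact positive definite, and so is its inverse $\bSigma(\cS)$. Inspecting
\[
\delta_v(\cS) = \frac{\left\lVert \bSigma(\cS) \bC_v^T \right\rVert^2}{\sigma^2 + \bC_v \bSigma(\cS) \bC_v^T},
\]
the numerator is manifestly non-negative as a squared Euclidean norm, while the denominator is strictly positive because $\bC_v \bSigma(\cS) \bC_v^T \geq 0$ (as a quadratic form of a positive semidefinite matrix) and $\sigma^2 > 0$. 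Hence $\delta_v(\cS) \geq 0$, which is exactly $f(\cS \cup \{v\}) \leq f(\cS)$.

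The singular case is vacuous: when $\bOmega(\cS)$ is singular, the convention $f(\cS) = +\infty$ adopted after \eqref{eqn:objective}, combined with the standing assumption that $\bOmega(\cS \cup \{v\})$ is non-singular (so $f(\cS \cup \{v\}) < +\infty$), trivially yields $f(\cS \cup \{v\}) \leq f(\cS)$. To conclude, for arbitrary $\cS \subseteq \cT \subseteq \cV$ I would enumerate $\cT \setminus \cS = \{v_1, \ldots, v_k\}$ and apply the one-element bound iteratively to obtain $f(\cS) \geq f(\cS \cup \{v_1\}) \geq \cdots \geq f(\cT)$. I do not anticipate any real obstacle here: the argument reduces to reading off the signs of the two ingredients in Lemma~\ref{lem:decrease}, with only the $+\infty$ convention requiring a brief mention to cover the degenerate case.
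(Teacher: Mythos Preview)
Your proof is correct and follows essentially the same approach as the paper: both use Lemma~\ref{lem:decrease} to read off that the numerator of $\delta_v(\cS)$ is non-negative and the denominator positive when $\bOmega(\cS)$ is invertible, and handle the singular case via the $+\infty$ convention. Your write-up is simply more explicit about why each sign holds and about the chaining to arbitrary containments $\cS \subseteq \cT$, which the paper leaves implicit.
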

\begin{proof}
	If $\bOmega(\cS)$ is invertible, then $\bSigma(\cS)$ is positive definite and the decrease \eqref{eqn:decrease} from adding a node is always non-negative.  If $\bOmega(\cS)$ is singular then $\delta_v(\cS) = +\infty$.
\end{proof}

Next we consider conditions under which $f(\cS)$ is a supermodular set function. 
For $f$ to be supermodular, we must have $\delta_v(\cS) \geq \delta_v(\cT)$ for any $\cS$, $\cT = \cS \cup \{u\}$, $u \notin \cS$, and $v \notin \cT$.  For general $\bC$ and $\bOmega_0$, it is possible to construct counterexamples where supermodularity does not hold. We omit such a construction here due to limited space.

In the remainder of this section, we specialize to the case $\bC = \bI$ and assume that  $\bOmega_0$ is a (possibly singular) Stieltjes matrix, i.e., a symmetric positive (semi)definite matrix with non-positive off-diagonal entries.  The latter assumption is satisfied if $\bOmega_0\propto\bL$ or $\bOmega_0\propto\bLNN$.  With $\bC = \bI$ and $\cS \neq \{\varnothing\}$, $\bOmega(\cS) = \bOmega_0 + \frac{1}{\sigma^2} \bI_{\cS}^T \bI_{\cS}$ remains Stieltjes and is invertible (i.e.\ positive definite).  We may then exploit the \emph{inverse-positivity} property of Stieltjes matrices \cite{berman1994nonnegative}, namely that $\bOmega(\cS)^{-1} = \bSigma(\cS) \geq 0$ element-wise.  Under the above assumptions, the function $f(\cS)$ is supermodular as shown below.
\begin{theorem}
	\label{thm_supermodular}
	The objective function $f(\cS)$ is supermodular if $\bC = \bI$ and $\bOmega_0$ is a (possibly singular) Stieltjes matrix.
\end{theorem}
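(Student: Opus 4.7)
The plan is to embed the set function $f$ into a smooth function on the non-negative orthant and infer supermodularity from the sign of its mixed second partial derivatives. Since $\bC=\bI$ and $\cS$ enters $\bOmega(\cS)$ only through its indicator vector, I would introduce
\[
\tilde f(\bz) \;=\; \tr\!\left(\bOmega_{0} + \diag(\bz)\right)^{-1}, \qquad \bz \in \bbR_{\geq 0}^{n},
\]
so that $f(\cS) = \tilde f\!\left((1/\sigma^{2})\,\bone_\cS\right)$, where $\bone_\cS\in\{0,1\}^{n}$ is the indicator vector of $\cS$. A standard fact (Topkis) is that a $C^{2}$ function on the non-negative orthant is supermodular as a lattice function iff all of its mixed second partials are non-negative, and supermodularity of $\tilde f$ transfers to supermodularity of $f$ because $\bone_{\cS\cup\cT} = \bone_\cS \vee \bone_\cT$ and $\bone_{\cS\cap\cT} = \bone_\cS \wedge \bone_\cT$. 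It therefore suffices to show $\partial^{2}\tilde f/\partial z_u\partial z_v \geq 0$ for all $u\neq v$.

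The next step is a short matrix-calculus computation. Writing $\bM(\bz) := \bOmega_{0}+\diag(\bz)$ and using the identity $\partial \bM^{-1}/\partial z_u = -\bM^{-1}\be_u\be_u^{T}\bM^{-1}$ together with the symmetry of $\bM^{-1}$ and $\bM^{-2}$, I get
\[
\frac{\partial \tilde f}{\partial z_u} \;=\; -\|\bM(\bz)^{-1}\be_u\|^{2}, \qquad \frac{\partial^{2} \tilde f}{\partial z_u\,\partial z_v} \;=\; 2\,[\bM(\bz)^{-1}]_{uv}\,[\bM(\bz)^{-2}]_{uv}.
\]
At this point the Stieltjes hypothesis enters: for any $\bz\geq 0$, $\bM(\bz)$ inherits the non-positive off-diagonal structure of $\bOmega_{0}$ and is positive definite by the invertibility assumption on $\bOmega(\cS)$, so $\bM(\bz)$ is itself Stieltjes. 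The inverse-positivity property then gives $\bM(\bz)^{-1}\geq 0$ entrywise, hence $[\bM^{-1}]_{uv}\geq 0$ and $[\bM^{-2}]_{uv}=\sum_{k}[\bM^{-1}]_{uk}[\bM^{-1}]_{kv}\geq 0$. Both factors being non-negative yields $\partial^{2}\tilde f/\partial z_u\partial z_v\geq 0$, which establishes supermodularity.

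The main care point will be the possible singularity of $\bOmega_{0}$ (e.g.\ when $\bOmega_{0}\propto\bL$), which forces $\tilde f(\bz)$ to be $+\infty$ on part of the boundary of $\bbR_{\geq 0}^{n}$; in particular $f(\emptyset)=+\infty$. Using the convention that $+\infty$ absorbs finite addition, the pairwise inequality $f(\cS)+f(\cT)\geq f(\cS\cup\cT)+f(\cS\cap\cT)$ is trivially preserved whenever $\emptyset$ appears on the right-hand side, and for every non-empty set the invertibility assumption keeps $\bM$ positive definite so that the smooth analysis above applies verbatim. As an alternative, one could instead start from Lemma~\ref{lem:decrease} with $\bC=\bI$ and expand $\bSigma(\cS\cup\{u\})$ via \eqref{eqn:MTinv} to reduce the claim to a scalar inequality in the entries of $\bSigma(\cS)$; that inequality can be verified from non-negativity of $\bSigma(\cS)$ and PSD-ness, but the bookkeeping is noticeably heavier than the derivative route above.
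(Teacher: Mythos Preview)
Your argument is correct and takes a genuinely different route from the paper's own proof. The paper proceeds exactly along the ``alternative'' you sketch at the end: it starts from Lemma~\ref{lem:decrease} with $\bC=\bI$, writes $\delta_v(\cS)-\delta_v(\cT)$ explicitly, substitutes the Sherman--Morrison update \eqref{eqn:MTinv} for $\bSigma(\cT)=\bSigma(\cS\cup\{u\})$, and then massages the resulting expression into a sum of inner products of elementwise non-negative vectors, using inverse-positivity of Stieltjes matrices at the final step. Your approach instead embeds $f$ into the smooth function $\tilde f(\bz)=\tr(\bOmega_0+\diag(\bz))^{-1}$, computes $\partial^{2}\tilde f/\partial z_u\partial z_v = 2\,[\bM^{-1}]_{uv}[\bM^{-2}]_{uv}$, and reads off the sign from inverse-positivity. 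Both proofs ultimately hinge on the same structural fact (entrywise non-negativity of $\bSigma$), but your derivative calculation replaces the paper's somewhat heavy algebraic bookkeeping with a two-line identity and the clean Topkis criterion. The paper's route, on the other hand, stays entirely within the discrete setting and never needs to discuss the domain of $\tilde f$ or the lattice-homomorphism transfer.

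One small point worth tightening: when $\bOmega_0$ is singular, your appeal to ``the invertibility assumption on $\bOmega(\cS)$'' to guarantee $\bM(\bz)\succ 0$ is stated a bit loosely, since that assumption is only for $\bz\in\{0,1/\sigma^{2}\}^{n}$. The fix is immediate: to derive $\delta_v(\cS)\geq\delta_v(\cS\cup\{u\})$ for non-empty $\cS$, integrate the mixed partial over the rectangle where $z_i=1/\sigma^{2}$ for $i\in\cS$, $z_u,z_v\in[0,1/\sigma^{2}]$, and $z_j=0$ otherwise; on that rectangle $\bM(\bz)\succeq\bOmega(\cS)\succ 0$, so $\tilde f$ is smooth throughout and the integration is valid. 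With that clarification your argument is complete.
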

\begin{proof}
	As discussed earlier, it suffices to show that $\delta_v(\cS) \geq \delta_v(\cT)$ for any $\cT = \cS \cup \{u\}$, $u \notin \cS$, and $v \notin \cT$.  If $\bOmega(\cS)$ is singular, then $\delta_v(\cS) = \infty$ and the inequality holds.  Otherwise, we use Lemma~\ref{lem:decrease} and set $\bC = \bI$ to obtain 
	\begin{align}
	\delta_v(\cS) - \delta_v(\cT) &= \frac{\left\lVert \bSigma(\cS)_v^T \right\rVert^2}{\sigma^2 + \bSigma(\cS)_{vv}} - \frac{\left\lVert \bSigma(\cT)_v^T \right\rVert^2}{\sigma^2 + \bSigma(\cT)_{vv}}\nonumber\\
	&\propto \left( \sigma^2 + \bSigma(\cT)_{vv} \right) \left\lVert \bSigma(\cS)_v^T \right\rVert^2\nonumber\\
	&\qquad {} - \left( \sigma^2 + \bSigma(\cS)_{vv} \right) \left\lVert \bSigma(\cT)_v^T \right\rVert^2.\label{eqn:decreaseDiff}
	\end{align}
	Using \eqref{eqn:MTinv} and $\bC = \bI$, $\bSigma(\cT)$ can be expressed in terms of $\bSigma(\cS)$.  Taking the $v$th row of the resulting expression yields  
	\begin{equation}\label{eqn:MTinvCol}
	\bSigma(\cT)_v = \bSigma(\cS)_v - \frac{\bSigma(\cS)_{uv}}{\sigma^2 + \bSigma(\cS)_{uu}} \bSigma(\cS)_u,
	\end{equation}
	whence 
	\begin{equation}\label{eqn:MTinvElem}
	\bSigma(\cT)_{vv} = \bSigma(\cS)_{vv} - \frac{\bSigma(\cS)_{uv}^2}{\sigma^2 + \bSigma(\cS)_{uu}}.
	\end{equation}
	We substitute \eqref{eqn:MTinvCol} and \eqref{eqn:MTinvElem} into the right-hand side of \eqref{eqn:decreaseDiff} and cancel a pair of terms.  Defining 
	\begin{equation}\label{eqn:Delta}
	\bDelta_v = \frac{\bSigma(\cS)_{uv}}{\sigma^2 + \bSigma(\cS)_{uu}} \bSigma(\cS)_u,
	\end{equation}
	the result is 
	\begin{align}
	\delta_v(\cS) - \delta_v(\cT)
	&\propto \left(\sigma^2 + \bSigma(\cS)_{vv}\right) \bDelta_v \left( 2\bSigma(\cS)_v - \bDelta_v \right)^T\nonumber\\
	&\qquad {} - \frac{\bSigma(\cS)_{uv}^2}{\sigma^2 + \bSigma(\cS)_{uu}} \left\lVert \bSigma(\cS)_v^T \right\rVert^2\nonumber\\
	&= \left(\sigma^2 + \bSigma(\cS)_{vv}\right) \bDelta_v \bSigma(\cT)_v^T
+ \bSigma(\cS)_v \times\nonumber\\
	&\quad \left[ \left(\sigma^2 + \bSigma(\cS)_{vv}\right) \bDelta_v 
- \frac{\bSigma(\cS)_{uv}^2}{\sigma^2 + \bSigma(\cS)_{uu}} \bSigma(\cS)_v \right]^T,\label{eqn:decreaseDiff2}
	\end{align}
	where in the last line we have used $\bSigma(\cT)_v = \bSigma(\cS)_v - \bDelta_v$ from \eqref{eqn:MTinvCol} and \eqref{eqn:Delta}.
	
	We show that the right-hand side of \eqref{eqn:decreaseDiff2} is non-negative.  Since $\bOmega_0$ is Stieltjes, so too are $\bOmega(\cS)$ and $\bOmega(\cT)$.  It follows that $\bSigma(\cS)_v$, $\bSigma(\cT)_v$, and $\bDelta_v$ are all element-wise non-negative (the last can be seen from \eqref{eqn:Delta}). 
	The remaining quantity in square brackets can be rewritten as 
	\begin{equation}\label{eqn:decreaseDiff3}
	\left(\sigma^2 + \bSigma(\cS)_{vv}\right) \bSigma(\cS)_u - \bSigma(\cS)_{uv} \bSigma(\cS)_v
	\end{equation}
	after pulling out a factor of $\bSigma(\cS)_{uv} \left(\sigma^2 + \bSigma(\cS)_{uu}\right)^{-1}$, which is non-negative.  To show that \eqref{eqn:decreaseDiff3} is also element-wise non-negative, we recall that $\cT = \cS \cup \{u\}$ and switch the roles of $u$ and $v$ in \eqref{eqn:MTinvCol} to derive 
	\begin{align*}
	&\bSigma(\cS\cup\{v\})_u = \bSigma(\cS)_u - \frac{\bSigma(\cS)_{uv}}{\sigma^2 + \bSigma(\cS)_{vv}} \bSigma(\cS)_v \geq 0,
	\end{align*}
	where the non-negativity is due to $\bOmega(\cS\cup\{v\})$ being Stieltjes.  It is seen that \eqref{eqn:decreaseDiff3} is a positive multiple of $\bSigma(\cS\cup\{v\})_u$.  We conclude that the right-hand side of \eqref{eqn:decreaseDiff2} is indeed non-negative as the sum of inner products of non-negative vectors.
\end{proof}

\subsection{Proposed Greedy AG-SSL Algorithm}
Based on the supermodularity result stated in Theorem \ref{thm_supermodular}, we propose a greedy  AG-SSL algorithm as summarized in Algoritm \ref{algo_AGSSL}.
\begin{algorithm}[h]
	\caption{Greedy AG-SSL Algorithm}
	\label{algo_AGSSL}
	\begin{algorithmic}
		\State \textbf{Input:} Graph $G$, regularization $\bOmega_0$, $\sigma$, \# of samples $s$
		\State \textbf{Output:} sampling set $\cS$ and predictor $\widehat{\bx}(\cS)$
		\State Initialization: $\cS=\{ \varnothing \}$
		\For{$k=1$ to $s$} 
		\State Find $v^*=\arg \min_{v \in \cV / \cS} f(\cS \cup v)$ using (\ref{eqn:objective})
		\State $\cS \leftarrow \cS \cup \{ v^* \}$
		\EndFor
		\State $\widehat{\bx}(\cS)=\sigma^{-2} \bSigma(\cS) \bI_{\cS}^T \by_{\cS}$ using (\ref{eqn_precision}) and (\ref{eqn_predictor})
	\end{algorithmic} 
\end{algorithm} 
Due to supermodularity, the proposed algorithm is guaranteed to have at most a constant-factor 
performance loss relative to the optimal sampling set, as stated in the following corollary \cite{Fujishige90}.
\begin{cor}
	Let $\cS$ be the sampling set obtained from Algorithm \ref{algo_AGSSL}, $\cS^*$ a minimizer of $f(\cS)$ in (\ref{eqn:objective}) over sets of size $|\cS|=s$, and $\cS^*_1$ a minimizer of $f(\cS)$ over singletons (e.g.\ from first iteration of Algorithm~\ref{algo_AGSSL}). Then 
    \[
    \frac{f(\cS) - f(\cS^*)}{f(\cS^*_1) - f(\cS^*)} \leq \left(1-\frac{1}{s}\right)^{s-1} \to \frac{1}{e} \textrm{ as } s \to \infty.
    \]
\end{cor}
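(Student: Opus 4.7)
The plan is to reduce the statement to the classical Nemhauser--Wolsey--Fisher analysis for greedy maximization of a monotone submodular function, adapted to this supermodular minimization setting with a possibly infinite baseline. Setting $g(\cS) = -f(\cS)$, the preceding results establish that $g$ is non-decreasing and submodular (wherever it is finite), and minimizing $f$ over sets of cardinality $s$ is equivalent to maximizing $g$ under the same cardinality constraint.

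The first step is to derive the standard one-step contraction. Write $\cS_k$ for the greedy set after $k$ iterations, so $\cS_0 = \varnothing$ and the algorithm's output is $\cS = \cS_s$. For any fixed $\cS^*$ with $|\cS^*| = s$, monotonicity gives $g(\cS^*) \leq g(\cS^* \cup \cS_k)$, while submodularity gives the telescoping bound
\begin{equation*}
g(\cS^* \cup \cS_k) - g(\cS_k) \leq \sum_{v \in \cS^* \setminus \cS_k} \bigl[ g(\cS_k \cup \{v\}) - g(\cS_k) \bigr] \leq s \bigl[ g(\cS_{k+1}) - g(\cS_k) \bigr],
\end{equation*}
where the last step uses $|\cS^* \setminus \cS_k| \leq s$ together with the definition of the greedy choice $\cS_{k+1}$. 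Translating back to $f$ and rearranging yields the contraction
\begin{equation*}
f(\cS_{k+1}) - f(\cS^*) \leq \left( 1 - \frac{1}{s} \right) \bigl[ f(\cS_k) - f(\cS^*) \bigr].
\end{equation*}

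The main obstacle, and the reason for the exponent $s-1$ rather than $s$ in the stated bound, is that one cannot simply unroll this recursion from $k = 0$ as in the textbook argument: if $\bOmega_0$ is singular then $f(\varnothing) = +\infty$, so $f(\cS_0) - f(\cS^*)$ is not a meaningful baseline. The remedy is to start the unrolling at $k = 1$. By the standing invertibility assumption on $\bOmega(\cS)$ for nonempty $\cS$, $f(\cS_1)$ is finite; moreover, the first greedy iteration by construction returns precisely $\cS_1 = \cS^*_1$, the minimizer of $f$ over singletons. Composing the contraction for $k = 1, 2, \dots, s-1$ then yields
\begin{equation*}
f(\cS) - f(\cS^*) \leq \left( 1 - \frac{1}{s} \right)^{s-1} \bigl[ f(\cS^*_1) - f(\cS^*) \bigr],
\end{equation*}
which (after dividing by the nontrivial case $f(\cS^*_1) - f(\cS^*) > 0$) is the claimed inequality. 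The limit $(1 - 1/s)^{s-1} \to e^{-1}$ as $s \to \infty$ is a standard calculus fact.
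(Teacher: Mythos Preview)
Your argument is correct. The paper itself does not prove this corollary; it simply invokes the classical greedy bound for monotone supermodular minimization by citing Fujishige. Your write-up therefore supplies what the paper omits: a self-contained Nemhauser--Wolsey--Fisher derivation, together with the observation that the possibly infinite value $f(\varnothing)$ forces the recursion to be anchored at the first greedy iterate $\cS_1 = \cS^*_1$ rather than at $\varnothing$, which is exactly why the exponent is $s-1$ instead of the textbook $s$. That point is only implicit in the paper's statement and is worth making explicit, as you have done.
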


\section{Experiments}

 \begin{table*}[t]
 	\centering
 	\caption{Fraction of overlapping samples between different AG-SSL methods (\%).}
 	\label{Table_overlap}
 	\begin{tabular}{c|cccc|ccccc}
 		\hline
 		\multicolumn{5}{c|}{Karate, \# samples $s=6$, $\bOmega_0=\bL$} & \multicolumn{5}{c}{Dolphin, \# samples $s=5$, $\bOmega_0=\bL$}   \\ \hline
 		& Proposed    & GSP      & GSO     & CRM     & \multicolumn{1}{c|}{}         & Proposed & GSP & GSO & CRM \\ \hline
 		Proposed    & 100         & 66.67    & 16.67   & 16.67   & \multicolumn{1}{c|}{Proposed} & 100      & 40  & 20  & 0   \\
 		GSP         & 66.67       & 100      & 33.33   & 0       & \multicolumn{1}{c|}{GSP}      & 40       & 100 & 0   & 0   \\
 		GSO         & 16.67       & 33.33    & 100     & 0       & \multicolumn{1}{c|}{GSO}      & 0        & 0   & 100 & 0   \\
 		CRM         & 16.67       & 0        & 0       & 100     & \multicolumn{1}{c|}{CRM}      & 20       & 0   & 0   & 100 \\ \hline
 	\end{tabular}
 \end{table*}

\subsection{Community Detection Datasets}
\label{subsec_community}
Community detection is a fundamental task in network analysis \cite{Fortunato10,CPY14ICASSP,CPY14deep}. Based on the connectivity structure of a graph, it aims to partition the nodes into well-connected groups, also known as communities. Community detection can be cast as a semi-supervised learning problem when the labels (i.e., community membership) of a subset of nodes are made available. In this section, we use the Karate club network \cite{Zachary77Karate} and the dolphin social network \cite{Lusseau03Dolphin} for performance evaluation. We purposely choose these two small networks having $n=34$ and $n=62$ nodes, respectively, due to the following reasons: (i) there are only two community labels ($+1$ and $-1$) and hence it is essentially a binary classification task, which avoids 
variations caused by different representations of multi-class labels. 
(ii) the graphs are given and therefore one avoids 
the effect of different graph construction methods on the final outcome. (iii) For active node sampling with fixed size $|\cS|=s$, the total number of sampling sets,  $\binom{n}{s}$, grows approximately as $n^s$. Therefore, smaller networks allow better tracking of the similarity of different sampling strategies in terms of the proportion of overlapping nodes.

\subsection{Comparative Methods}
\label{subsec_compare}
We compare the proposed greedy AG-SSL method in Algorithm \ref{algo_AGSSL} with the following AG-SSL methods:\\
\noindent
\textbf{Random sampling (Rand):} Randomly sample $s$ nodes and report the averaged results over 10 trials.
\\
\noindent
\textbf{Graph spectral proxy (GSP) \cite{anis2016efficient}:}  The graph spectral proxy using the regularization matrix $\bOmega_0$ for GFT  is used for node sampling (Algorithm 1 in \cite{anis2016efficient}) and the corresponding predictor is implemented. We set the parameters of GSP as $k=5$ and $r=s$.
\\
\noindent
\textbf{Graph shift operator (GSO) \cite{chen2015discrete}:} The graph shift operator $\bD^{-1} \bA$ is used for node sampling  (Algorithm 1 in \cite{chen2015discrete}) and the graph total variation minimization method \cite{SihengChen15signalrecovery} is used for prediction.  We set the parameters of GSO as $K=20$.
\\
\noindent
\textbf{Chamon-Ribeiro's method (CRM) \cite{chamon2017greedy}:} The greedy node sampling method using $\bOmega_0$ for GFT (Algorithm 2 in \cite{chamon2017greedy}). Since the CRM optimal predictor requires assumptions 
of band-limitedness and stationarity, 
we instead substitute its selection of sampled nodes into our predictor in (\ref{eqn_predictor}).
 The noise parameter of CRM is $\lambda_w=0.01$.

  \begin{figure}[t]
 	\centering
 	\begin{subfigure}[b]{.9\linewidth}
 		\includegraphics[width=\textwidth]{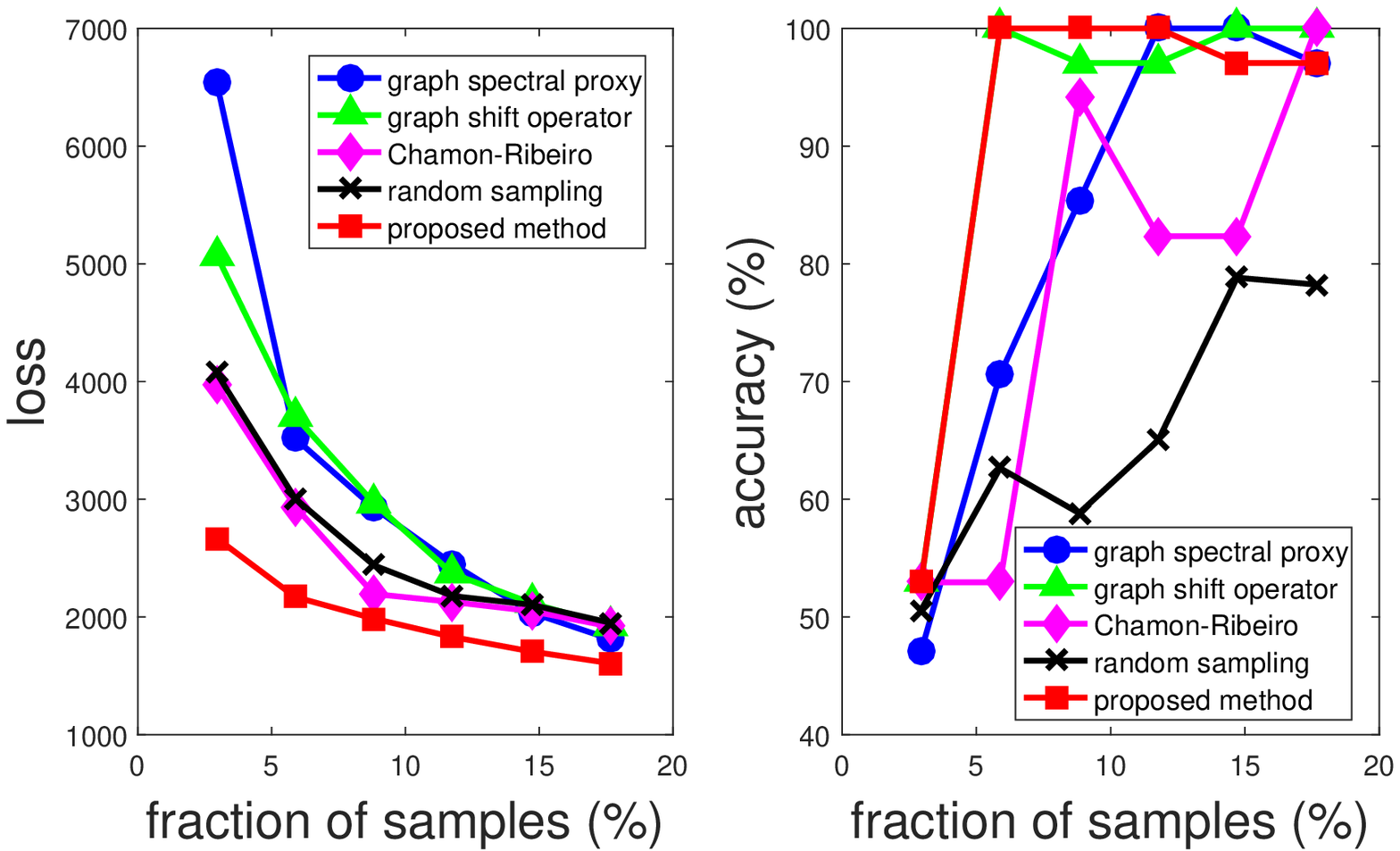}
 		\caption{Karate network.}
 	\end{subfigure}%
    \\
 	\centering
 	\begin{subfigure}[b]{.9\linewidth}
 		\includegraphics[width=\textwidth]{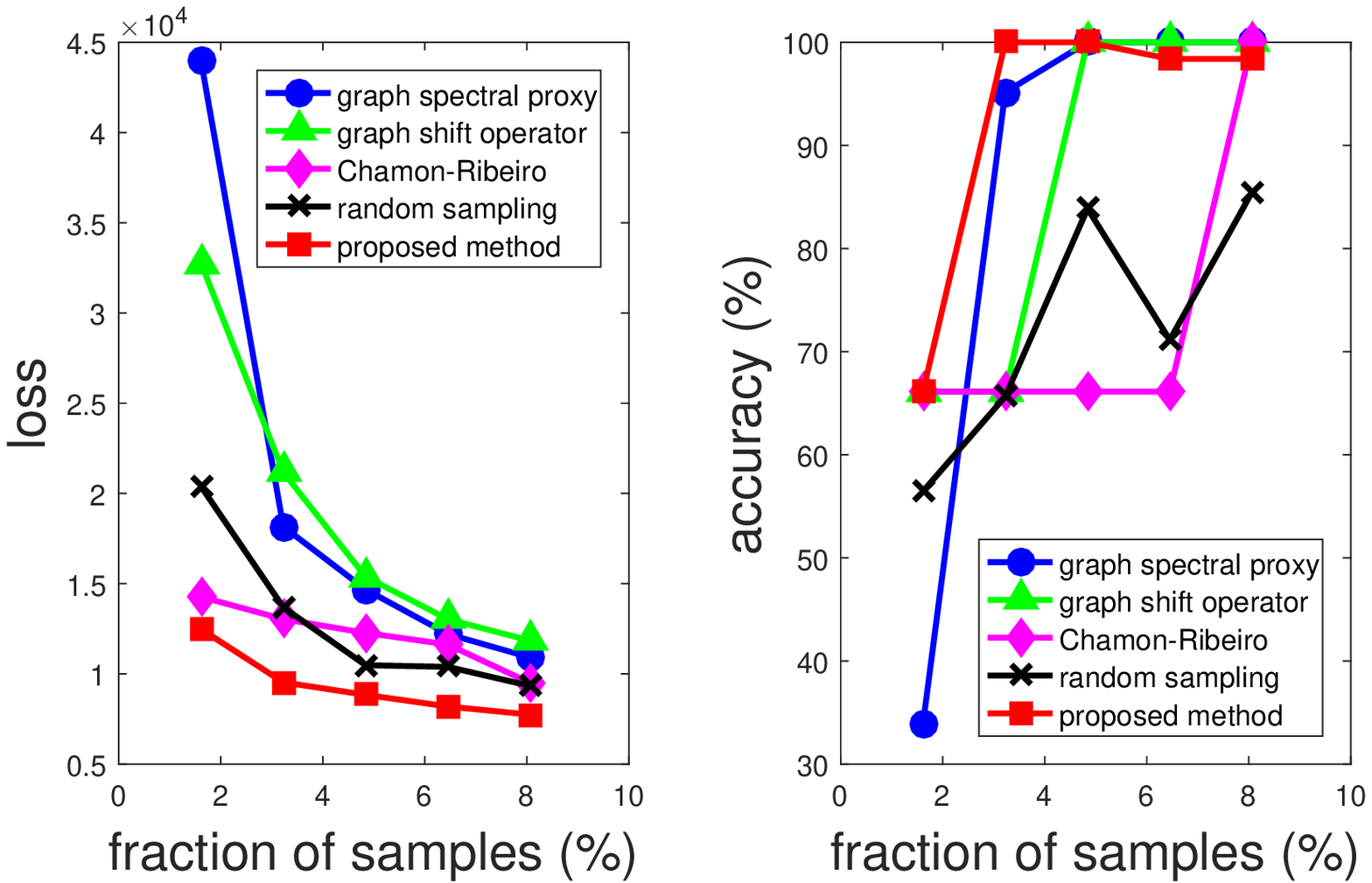}
 		\caption{Dolphin network.}
 	\end{subfigure}
 	\caption{AG-SSL using $\bOmega_0=\bL$ and $\sigma^2=1/\tr(\bOmega_0)$. The proposed method yields perfect community detection by only sampling 2 nodes in each dataset.}
 	\label{Fig_L}
 \end{figure}

 \begin{figure}[t]
 	\centering
 	\begin{subfigure}[b]{.9\linewidth}
 		\includegraphics[width=\textwidth]{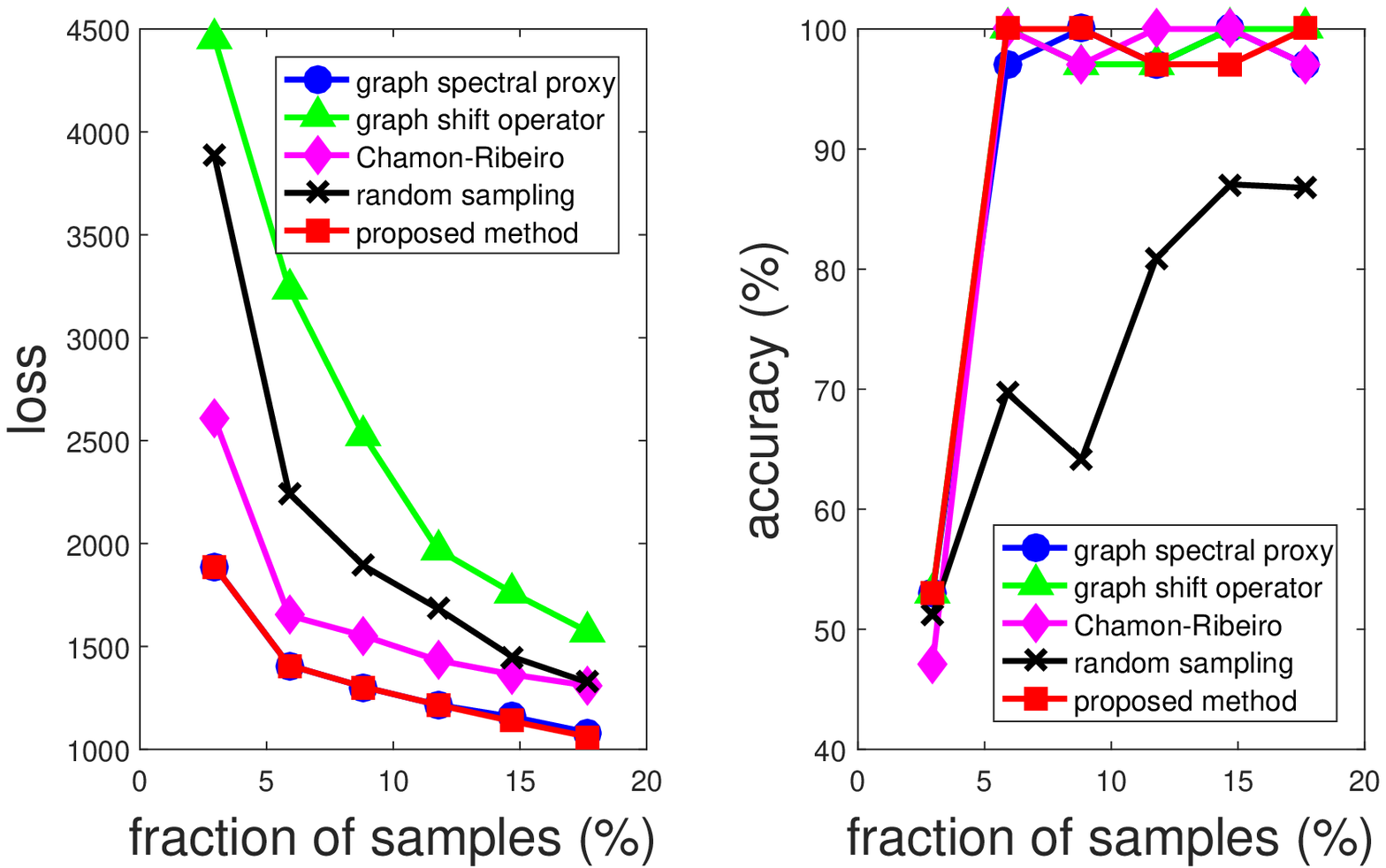}
 		\caption{Karate network.}
 	\end{subfigure}%
    \\
 	\centering
 	\begin{subfigure}[b]{.9\linewidth}
 		\includegraphics[width=\textwidth]{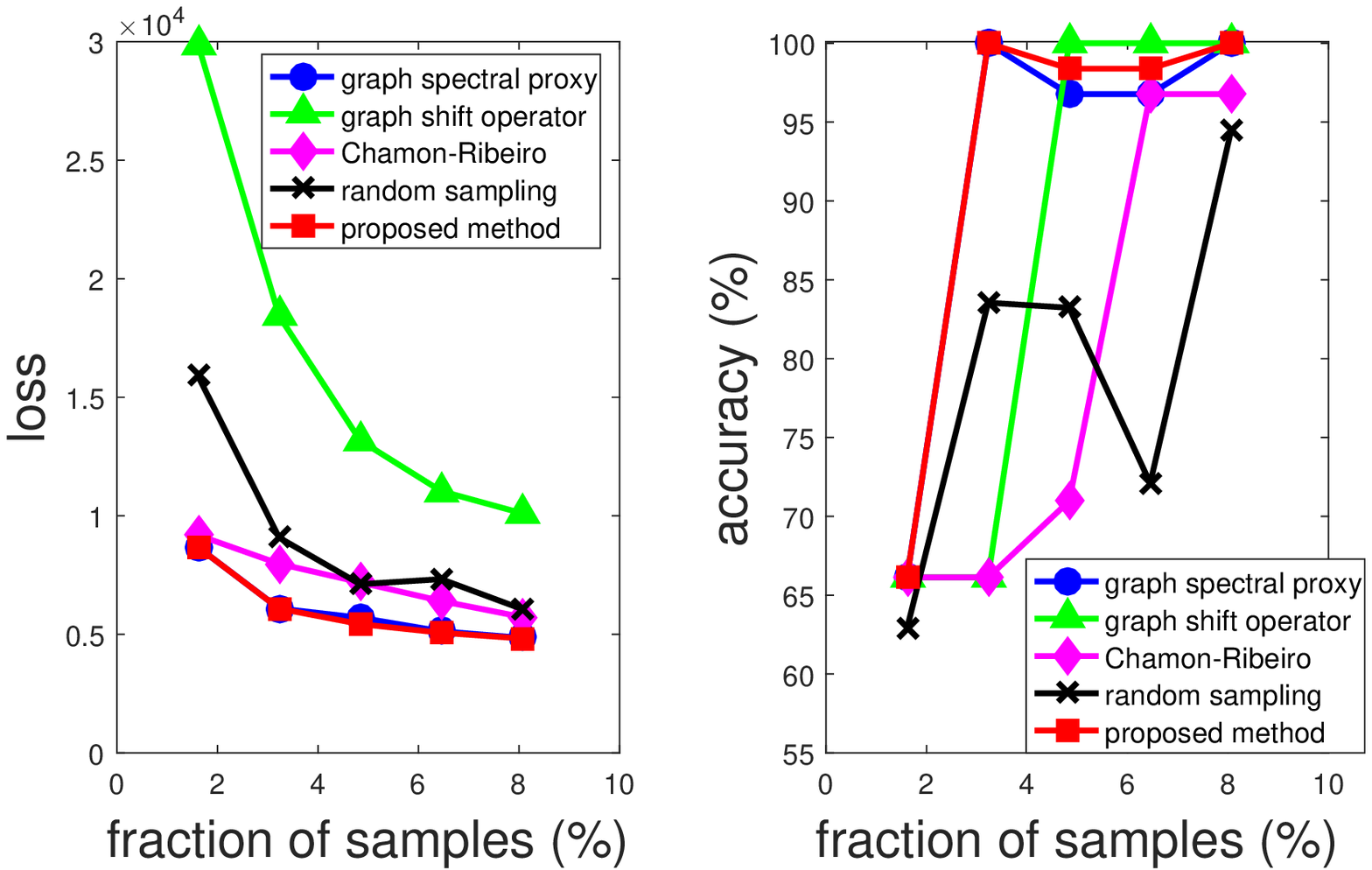}
 		\caption{Dolphin network.}
 	\end{subfigure}
 	\caption{AG-SSL  using $\bOmega_0=\bLNN$ and $\sigma^2=1/\tr(\bOmega_0)$.  The proposed method attains similar performance as in Figure \ref{Fig_L}.
 		The use of $\bLNN$ improves the accuracy of GSP and CRM.}
 	\label{Fig_LN}
 \end{figure}

\subsection{Performance Evaluation}
Since AG-SSL on the two community detection datasets discussed in Section \ref{subsec_community} can be viewed as a semi-supervised binary classification problem,  for each method we take the sign of the prediction as the final predicted label. 
For all methods other than Algorithm~\ref{algo_AGSSL}, we assume that community labels are acquired without noise. For Algorithm~\ref{algo_AGSSL}, to fully comply with the signal model in Section \ref{sec:prob}, we added an artificial zero-mean Gaussian noise with variance $\sigma^2=1 / \tr(\bOmega_0)$ to the observed labels. The parameter $\sigma^2$ can also be viewed as the regularization parameter for the Stieltjes matrix $\bOmega_0$ in (\ref{eqn_GSSL}).
Figures 1 and 2 show the loss function value from (\ref{eqn:objective}) and the prediction accuracy of different AG-SSL methods using $\bOmega_0=\bL$ and $\bOmega_0=\bLNN$, respectively, where in each curve the markers correspond to the number $s$ of selected samples. It is observed that the proposed method (Algorithm \ref{algo_AGSSL}) attains the least loss, and Rand may have lower loss than some methods. This is not surprising because the proposed method is designed to minimize the loss via greedy node sampling, while other methods have different objective functions. Nonetheless, for each method except Rand, lower loss is aligned with higher accuracy. The slight fluctuation in the accuracy curve is an artifact of the sign operation for label prediction.

 It is worth mentioning that the proposed method can achieve perfect community detection (100\% accuracy) by only sampling 2 nodes in each dataset, either using $\bL$ or $\bLNN$ as the regularizer. On the other hand, the other methods may require more samples to achieve the same performance. 
We also observe that using  $\bLNN$ instead of $\bL$ can improve the accuracy of GSP and CRM, which can be explained by the fact that selecting a different $\bOmega_0$ is equivalent to changing the Graph Fourier basis for sampling. To further study different AG-SSL methods, Table \ref{Table_overlap} displays the fraction of overlapping samples between different sampling strategies when $\bOmega_0=\bL$. An interesting finding is that the proposed method has a good amount of overlap with GSP, whereas the samples of GSO and CRM can be very different. Similar relations hold when 
$\bOmega_0=\bLNN$. We believe this distinction can be explained by the use of spectral proxy for approximating the actual bandwidth of a graph signal in GSP \cite{anis2016efficient}.

\section{Conclusion}
This paper proved the supermodularity of graph-based SSL under the family of Stieltjes matrix regularization functions, which includes the unnormalized and normalized graph Laplacian matrices that are widely used in graph signal processing. Our analysis yields an efficient greedy AG-SSL algorithm with guaranteed performance relative to the optimum. Evaluated on two community detection datasets, the proposed algorithm outperforms three recent graph signal sampling methods given limited sample budgets.
Our future work includes extension to vector-valued labels and joint optimization of the sampling set and the Stieltjes matrix for AG-SSL.

\clearpage
\bibliographystyle{IEEEbib}
\bibliography{IEEEabrv,CPY_ref_20171023}

\end{document}